\documentclass{article}

\usepackage{microtype}
\usepackage{graphicx}
\usepackage{subcaption}
\usepackage{booktabs} %

\usepackage{hyperref}
\usepackage{array}

\usepackage[accepted]{icml2026}

\usepackage{amsmath}
\usepackage{amssymb}
\usepackage{mathtools}
\usepackage{amsthm}
\usepackage{multirow}
\usepackage{dsfont}

\usepackage[capitalize,noabbrev]{cleveref}

\theoremstyle{plain}
\newtheorem{theorem}{Theorem}[section]

\theoremstyle{definition}

\theoremstyle{remark}

\usepackage[textsize=tiny,disable]{todonotes}
\usepackage{tikz}
\usetikzlibrary{positioning, arrows.meta, shapes.geometric, fit, calc, backgrounds, decorations.pathreplacing, shadows.blur}

\usepackage{acronym}
\acrodef{MIA}{membership inference attack}
\acrodef{DP}{differential privacy}
\acrodef{DPSGD}{differentially private stochastic gradient descent}
\acrodef{ICL}{in-context learning}
\acrodef{SCM}{structural causal model}
\acrodef{AUC}{area under the curve}
\acrodef{TPR}{true positive rate}
\acrodef{FPR}{false positive rate}
\acrodef{PATE}{Private Aggregation of Teacher Ensembles}
\acrodef{SMOTE}{Synthetic Minority Over-sampling Technique}
\acrodef{LOF}{Local Outlier Factor}
\acrodef{KNN}{K-nearest neighbors}
\acrodef{RDP}{R\'enyi differential privacy}
\acrodef{TFM}{tabular foundation model}
\acrodef{LLM}{large language model}

\newcommand{\tododone}[1]{}

\def\*#1{\mathbf{#1}}

\usepackage{xcolor}
\usepackage{etoolbox}
\newif\ifrebuttal

\usepackage{xparse}
\usepackage{xspace}
\usepackage{tikz}
\usetikzlibrary{positioning, shapes.geometric, arrows.meta, decorations.pathreplacing}
\usepackage{float}
\usepackage{multirow}
\usepackage{multicol}
\usepackage{colortbl}
\usepackage{array}
\newcommand{\PreserveBackslash}[1]{\let\temp=\\#1\let\\=\temp}
\newcolumntype{C}[1]{>{\PreserveBackslash\centering}p{#1}}
\newcolumntype{R}[1]{>{\PreserveBackslash\raggedleft}p{#1}}
\newcolumntype{L}[1]{>{\PreserveBackslash\raggedright}p{#1}}

\newcommand{\gcheck}{\textcolor{green!60!black}{\ding{51}}}
\newcommand{\rcross}{\textcolor{red}{\ding{55}}}

\usepackage{microtype}
\usepackage{graphicx}
\usepackage{mathtools}
\usepackage{float}
\usepackage{subcaption}
\usepackage{booktabs} %
\usepackage[shortlabels]{enumitem}

\usepackage{amssymb}%
\usepackage{pifont}%

\usepackage{bbold}

\usepackage{hyperref,amssymb,enumitem}

\setlist[itemize]{leftmargin=*}
\setlist[enumerate]{leftmargin=*}

\usepackage{stackengine}

\usepackage{xcolor}

\def\*#1{\mathbf{#1}}

\usepackage{booktabs,arydshln}
\makeatletter
\def\adl@drawiv#1#2#3{%
	\hskip.5\tabcolsep
	\xleaders#3{#2.5\@tempdimb #1{1}#2.5\@tempdimb}%
	#2\z@ plus1fil minus1fil\relax
	\hskip.5\tabcolsep}
\newcommand{\cdashlinelr}[1]{%
	\noalign{\vskip\aboverulesep
		\global\let\@dashdrawstore\adl@draw
		\global\let\adl@draw\adl@drawiv}
	\cdashline{#1}
	\noalign{\global\let\adl@draw\@dashdrawstore
		\vskip\belowrulesep}}
\makeatother

\usepackage{cleveref}
\crefalias{prop}{proposition} %

\newif\ifdraft
\drafttrue

\icmltitlerunning{TabPATE: Differentially Private Tabular In-Context Learning Without Public Data}

\begin{document}

\twocolumn[
  \icmltitle{TabPATE: Differentially Private Tabular In-Context Learning\\ Without Public Data}

  \begin{icmlauthorlist}
    \icmlauthor{Dariush Wahdany}{cispa}
    \icmlauthor{Matthew Jagielski}{anth}
    \icmlauthor{Jesse C. Cresswell}{l6}
    \icmlauthor{Adam Dziedzic}{cispa}
    \icmlauthor{Franziska Boenisch}{cispa}
  \end{icmlauthorlist}

  \icmlaffiliation{cispa}{CISPA}
  \icmlaffiliation{anth}{Anthropic}
  \icmlaffiliation{l6}{Layer 6 AI}

  \icmlcorrespondingauthor{Dariush Wahdany}{dariush.wahdany@cispa.de}

  \icmlkeywords{Machine Learning, ICML}

  \vskip 0.3in
]

\printAffiliationsAndNotice{}  %

\begin{abstract}
  Tabular foundation models enable accurate in-context learning (ICL) from small labeled datasets, but the private records placed in context can leak through model predictions. We first show that even basic membership inference attacks succeed against tabular ICL, motivating formal privacy protection. We then introduce TabPATE, a differentially private PATE-style defense for tabular ICL that does not require public in-distribution data. TabPATE partitions the private context across teacher models, privately aggregates their labels on synthetic tabular queries, and releases the resulting labeled queries as a student context. Because tabular features are bounded and relatively low-dimensional, useful queries can be generated from feature ranges alone or from lightly privatized marginals. Across tabular benchmarks, TabPATE preserves competitive utility while reducing membership inference to near-random success, providing a practical path to private tabular ICL without public data.

\end{abstract}

\section{Introduction}

\Acp{TFM} enable accurate predictions on structured data via \acf{ICL}: a small labeled dataset is placed in the model context and used to predict labels for new queries, without task-specific training~\cite{hollmannTabPFNTransformerThat2022,hollmannAccuratePredictionsSmall2025}. This makes \acp{TFM} appealing in domains such as healthcare and finance, where tabular data is common and labeled data is often scarce~\cite{quang2024dementia,kostrzewa2025foundation}. However, the context records may contain private information.

We first verify that this concern is not merely hypothetical. In \ac{ICL}, membership inference asks whether a target record was included in the private demonstration context. Adapting standard \acp{MIA}~\cite{shokriMembershipInferenceAttacks2017,carliniMembershipInferenceAttacks2022} to tabular \ac{ICL}, we find that prediction behavior reveals non-trivial information about context membership. Even a passive grey-box adversary obtains $9.9\%$ \ac{TPR} at $1\%$ \ac{FPR}, while stronger active attacks can reach substantially higher leakage. These results motivate mechanisms that protect the in-context data itself, rather than only the pretraining data of the foundation model.

\Acf{DP}~\cite{dworkCalibratingNoiseSensitivity2006} provides a formal way to limit the influence of any individual private record. Yet standard DP training methods are not directly suited to tabular \ac{ICL}. \Ac{DPSGD}~\cite{abadiDeepLearningDifferential2016} requires gradients and parameter updates, whereas \acp{TFM} are used as frozen inference-time models. \ac{PATE}~\cite{papernotSemisupervisedKnowledgeTransfer2017,papernotScalablePrivateLearning2018}, in contrast, is naturally compatible with \ac{ICL} as private data can be split across teacher contexts, whose predictions are aggregated with calibrated noise. This idea underlies PromptPATE~\cite{duanFlocksStochasticParrots2023} in the language domain. Like all prior PATE-style methods, PromptPATE requires unlabeled public data from the target distribution for knowledge transfer. In sensitive tabular domains, such public in-distribution data is often unavailable.

We introduce \textbf{TabPATE}, a central-DP framework for tabular \ac{ICL} that removes this public-data requirement. TabPATE partitions the private context across teacher \acp{TFM}, queries them on synthetic tabular records, privately aggregates their votes using Confident-GNMax algorithm~\citep{papernotScalablePrivateLearning2018}, and releases the resulting labeled synthetic records as a student context for privacy-preserving deployment. The key observation is that tabular domains differ from images and text: feature spaces are bounded, typed, and comparatively low-dimensional. Thus, useful teacher queries can be generated \textit{without public examples}, either from a data-independent bounds-based prior or from a small DP release of marginal statistics. This yields a PATE-style defense for tabular \ac{ICL} with no public in-distribution data.

We evaluate our TabPATE on standard tabular benchmarks and compare against non-private \ac{ICL}, PromptPATE, query-time aggregation, and DP synthetic-data baselines. TabPATE preserves competitive utility at moderate privacy budgets while reducing \ac{MIA} success close to random guessing. Our results suggest that public-data-free PATE-style knowledge transfer is a practical route to private deployment of \acp{TFM}.
In summary, we make the following contributions:
\begin{enumerate}[itemsep=0pt, topsep=0pt]
    \item We adapt membership inference to tabular \ac{ICL} and show that private context records leak through model predictions, motivating formal protection.

    \item We propose TabPATE, a central-DP PATE-style framework for tabular \ac{ICL} that replaces public in-distribution transfer data with synthetic tabular queries.

    \item We evaluate TabPATE across tabular benchmarks, showing competitive utility at moderate privacy budgets and near-random membership inference success.
\end{enumerate}

\section{Related Work}

\paragraph{Tabular foundation models.}
\Acp{TFM} perform prediction on structured data via transformer-based \acf{ICL}~\cite{vaswaniAttentionAllYou2017,brownLanguageModelsAre2020}. Given labeled demonstrations in the context, TFMs predict labels for new queries without updating model parameters. TabPFN~\cite{hollmannTabPFNTransformerThat2022,hollmannAccuratePredictionsSmall2025} is a prominent example, pretrained on synthetic tasks and competitive with strong baselines. Recent models such as TabDPT~\cite{ma2026tabdpt} and TabICL~\cite{qu2025tabicl, qu2026tabiclv2} further improve tabular \ac{ICL} performance, with benchmarks tracking rapid progress~\cite{erickson2025tabarena}, and applications to adjacent fields \cite{balazadeh2025causalpfn, stith2026causal}. We study the privacy of this inference-time learning paradigm, where sensitive records are placed directly in the model context.

\paragraph{Membership inference for ICL.}
\Acp{MIA}~\cite{shokriMembershipInferenceAttacks2017} are the standard empirical tool for measuring whether individual records influence model behavior. Modern attacks include calibrated likelihood-ratio attacks such as LiRA~\cite{carliniMembershipInferenceAttacks2022}, label-only perturbation attacks~\cite{choquette2021labela}, and poisoning-based amplifications such as Truth Serum~\cite{tramerTruthSerumPoisoning2022}. For text \ac{ICL}, Brainwash-style attacks show that context examples can leak through resistance to adversarial relabeling~\cite{wenMembershipInferenceAttacks2024}. We adapt these attack ideas to tabular \ac{ICL} to motivate the need for formal protection.

\paragraph{Differential privacy and PATE.}
\Ac{DP}~\cite{dworkCalibratingNoiseSensitivity2006} bounds the effect of any single private record on a released computation. \Ac{DPSGD}~\cite{abadiDeepLearningDifferential2016} achieves this during training via gradient clipping and noise, but is not directly applicable to frozen \ac{ICL} models. \Ac{PATE}~\cite{papernotSemisupervisedKnowledgeTransfer2017,papernotScalablePrivateLearning2018} instead partitions private data among teachers and transfers knowledge through noisy vote aggregation, making it a natural fit for \ac{ICL}. PromptPATE~\cite{duanFlocksStochasticParrots2023} applies this idea to language-model prompting, but like standard PATE, relies on public in-distribution data for teacher queries. Other DP-ICL methods either answer each query privately at inference time~\cite{wu2024privacy} or target autoregressive text generation~\cite{tang2024privacypreserving}. TabPATE follows the PATE-style student-release approach, but removes the public-data requirement by generating synthetic tabular queries.

\paragraph{DP tabular synthetic data.}
DP synthetic data methods such as AIM~\cite{mckennaAIMAdaptiveIterative2022} and MST~\citep{mckennaWinningNISTContest2021} privately estimate low-dimensional marginals and post-process them with Private-PGM~\citep{mckenna2019graphical} to generate synthetic tables. We use such methods as baselines, but TabPATE differs in goal: it does not aim to release a synthetic version of the private dataset, but only to generate useful unlabeled queries for private PATE-style knowledge transfer.

\section{Membership Inference in Tabular ICL}
\label{sec:mia_motivation}

Before introducing TabPATE, we verify that private in-context records can leak through the predictions of a \ac{TFM}. In standard training-based \acp{MIA}, membership asks whether a record was used for training~\cite{shokriMembershipInferenceAttacks2017}. In tabular \ac{ICL}, the pretrained model is fixed; MIAs instead ask whether a target record was included in the private demonstration context used at inference time.

We consider adversaries along two axes: \emph{output access}, where the model returns either labels only (black-box) or prediction probabilities (grey-box), and \emph{context manipulation}, where the adversary is either passive or can inject additional examples into the context. This yields four threat models, detailed in \Cref{app:mia_methods}. We adapt representative attacks to each setting: label-only perturbation~\cite{choquette2021labela}, LiRA~\cite{carliniMembershipInferenceAttacks2022}, Brainwash-style context manipulation~\cite{wenMembershipInferenceAttacks2024}, and LiRA with Truth Serum-style poisoning~\cite{tramerTruthSerumPoisoning2022}.

All attacks use the same LiRA-style calibration. For each target $x_q$, we sample shadow demonstration sets, half including and half excluding $x_q$, and extract an attack-specific signal $s(x_q)$ from the model output. We fit per-sample Gaussian IN and OUT distributions and score membership by the log-likelihood ratio
\begin{equation}
    \mathrm{score}(x_q)=
    \log \frac{p_{\mathrm{IN}}(s(x_q))}
              {p_{\mathrm{OUT}}(s(x_q))}.
    \label{eq:lira_score}
\end{equation}
The attacks differ only in the signal: prediction stability under perturbations, logit-scaled confidence, number of mislabeled context copies required to flip the prediction, or confidence after poisoning.

\begin{table}[t]
    \centering
    \caption{\textbf{MIA success by threat model.} Mean over five datasets with 128 shadow demonstration sets. Attack success increases with adversary capability.}
    \label{tab:threat_model_results}
    \small
    \begin{tabular}{@{}lcc@{}}
        \toprule
        \textbf{Attack} & \textbf{AUC} & \textbf{TPR@1\%FPR} \\
        \midrule
        Perturbation (black-box, passive) & 0.55 & 1.4\% \\
        LiRA (grey-box, passive) & 0.73 & 9.9\% \\
        Brainwash (black-box, active) & 0.81 & 29.7\% \\
        LiRA + Poisoning (grey-box, active) & 0.96 & 75.4\% \\
        \bottomrule
    \end{tabular}
\end{table}

\Cref{tab:threat_model_results} shows that tabular \ac{ICL} leaks membership information across all threat models. Even passive grey-box access yields a clear signal ($0.73$ AUC, $9.9\%$ TPR at $1\%$ FPR). Active context manipulation substantially amplifies leakage: LiRA with poisoning reaches $0.96$ AUC and $75.4\%$ TPR at $1\%$ FPR on average. More results are presented in \Cref{app:vulnerability}. These results establish that using sensitive records directly as demonstrations creates measurable privacy risk, motivating the DP defense we introduce next.

\section{Our TabPATE: Private Tabular \ac{ICL}}
\label{sec:tabpate}

TabPATE is a PATE-style mechanism for releasing a private \ac{ICL} context for a TFM. Given a sensitive labeled dataset $\mathcal{D}$, our goal is to produce a new labeled student context $\widetilde{\mathcal{D}}$ that can be used for downstream predictions without further access to $\mathcal{D}$. TabPATE follows the teacher--student structure of PromptPATE~\citep{duanFlocksStochasticParrots2023}, but removes its requirement for unlabeled public in-distribution data.

We first partition $\mathcal{D}$ into $K$ disjoint subsets. Each subset defines an \ac{ICL} teacher: the same frozen \ac{TFM} prompted with that subset as its demonstration context. We then query all teachers on unlabeled synthetic tabular records $\tilde{x}$ and aggregate their votes with Confident-GNMax~\citep{papernotScalablePrivateLearning2018}. If the teachers agree sufficiently, the mechanism releases a noisy plurality label $\tilde{y}$; otherwise it abstains. The resulting pairs $(\tilde{x},\tilde{y})$ form the student context $\widetilde{\mathcal{D}}$, which is released and used with the frozen \ac{TFM}. Since downstream predictions depend only on this released context, they incur no additional privacy cost.

The main departure from PromptPATE is how the unlabeled teacher queries are obtained. Standard PATE-style methods assume public samples from the target distribution. TabPATE instead exploits the structured nature of tabular data: features have known types and bounded ranges, so candidate records can be generated from a schema rather than a public dataset. We use a parameter $\alpha \in [0,1]$ to split the privacy budget $\varepsilon$. When $\alpha=0$, queries are sampled from a fixed, data-independent prior over the feature ranges, incurring no privacy cost before PATE aggregation. When $\alpha>0$, TabPATE spends $\alpha\varepsilon$ to release simple \ac{DP} marginal statistics of the private context, and samples queries from the resulting product distribution; the remaining $(1-\alpha)\varepsilon$ is used for Confident-GNMax labeling. We empirically observe that in balanced classification setups, $\alpha=0$ is sufficient, but that more complex problems, such as classification of imbalanced datasets or regression (see \Cref{sec:regression}) benefit from setting a small positive $\alpha>0$. 
The resulting method provides central $(\varepsilon,\delta)$-\ac{DP} for the released student context. %
The full algorithm and privacy proof are given in \Cref{app:tabpate}.

\section{Experiments}
\label{sec:experiments}

We evaluate whether TabPATE provides useful private \ac{ICL} without public in-distribution data. We focus on four questions: (i) how much utility TabPATE preserves under \ac{DP}, (ii) how it compares to other DP baselines, (iii) whether it reduces the \ac{MIA} risks from \S\ref{sec:mia_motivation}, and (iv) whether it is able to extend from pure classification to regression setups, another core application of TFMs.

\subsection{Setup}
\label{sec:setup}

For the main experiments, we evaluate on five OpenML tabular classification datasets used in prior tabular \ac{ICL} work: German Credit (\textbf{Credit-G}) ($n{=}1000$), \textbf{Blood Transfusion} ($n{=}748$), \textbf{Diabetes} ($n{=}768$), \textbf{Phoneme} ($n{=}5404$), and \textbf{Wilt} ($n{=}4839$). We use TabPFN as the \ac{TFM}~\cite{hollmannTabPFNTransformerThat2022,hollmannAccuratePredictionsSmall2025}. Unless stated otherwise, we use $\alpha=0$ and report the results as averaged over five random seeds. We report balanced accuracy as the main utility metric, and evaluate privacy at $\varepsilon\in\{1,10\}$ with $\delta=10^{-5}$.

\paragraph{Baselines.}
We compare TabPATE against five \ac{DP} baselines covering complementary privacy strategies. As a non-\ac{ICL} reference, we train a task-specific RealMLP~\citep{holzmuller2024realmlp} with \textbf{DP-SGD}. For releasable \ac{ICL} contexts without public data, \textbf{\ac{DP}-Synthetic} generates a private synthetic dataset from the full sensitive data using AIM~\citep{mckennaAIMAdaptiveIterative2022} or MST~\citep{mckennaWinningNISTContest2021}, while \textbf{DP-TabICL}~\citep{carey2024dptabicl} releases private prototypes; we evaluate its central-\ac{DP} variant since the local-\ac{DP} variant scales exponentially with the number of binary features. We also include \textbf{PromptPATE}~\citep{duanFlocksStochasticParrots2023}, which is closest to TabPATE but assumes access to unlabeled public in-distribution data for the PATE knowledge transfer. Finally, \textbf{Query-Time} is a sample-and-aggregate baseline inspired by prior private query answering~\citep{nissimSmoothSensitivitySampling2007,wu2024privacy}: it partitions the private context among teachers and answers each downstream query through noisy ensemble aggregation, incurring additional privacy cost per query. A full overview of the baselines is provided in \Cref{app:baselines}.

\subsection{Utility}
\label{sec:main_results}

\begin{table}[t]
    \centering
    \caption{\textbf{Balanced accuracy at different privacy budgets} (mean $\pm$ std across 5 datasets, 5 seeds).}
    \label{tab:dp_main_balanced_results}
    \small
    \setlength{\tabcolsep}{6pt}
    \begin{tabular}{lcc}
        \toprule
        Approach & $\varepsilon = 1$ & $\varepsilon = 10$ \\
        \midrule
        Non-private & \multicolumn{2}{c}{76.7 $\pm$ 2.4\%} \\
        \midrule
        \ac{DP}-SGD & 51.9 $\pm$ 0.9\% & 52.3 $\pm$ 1.0\% \\
        \ac{DP}-TabICL & 51.9 $\pm$ 0.9\% & 52.3 $\pm$ 1.0\% \\
        \ac{DP}-Synthetic & 51.9 $\pm$ 0.9\% & 52.3 $\pm$ 1.0\% \\
        Query-Time & 52.6 $\pm$ 1.5\% & 61.3 $\pm$ 4.3\% \\
        PromptPATE$^\dagger$ & 61.3 $\pm$ 2.6\% & 68.6 $\pm$ 3.0\% \\
        \rowcolor{gray!15}
        \textbf{TabPATE} & 55.6 $\pm$ 4.9\% & 71.2 $\pm$ 4.6\% \\
        \bottomrule
    \end{tabular}

    {\scriptsize $^\dagger$Requires public data from target distribution; we use 30\% held-out training data.}
\end{table}

\Cref{tab:dp_main_balanced_results} reports balanced accuracy averaged across the five datasets. At $\varepsilon=10$, TabPATE achieves the best utility among the evaluated private methods, reaching $71.2\%$ balanced accuracy without requiring public data, compared to $68.6\%$ for PromptPATE, which uses held-out in-distribution data, and $61.3\%$ for Query-Time. At the stricter budget $\varepsilon=1$, PromptPATE benefits from real in-distribution queries, but TabPATE still outperforms the public-data-free baselines. Full standard-accuracy results and per-dataset breakdowns are given in \Cref{app:utility_results}.

\subsection{Attack Mitigation}
\label{sec:mia_defense}

We next verify that our TabPATE mitigates the membership leakage observed for non-private tabular \ac{ICL}. We attack the released student with the strongest passive grey-box attack, LiRA, and compare the resulting ROC curves in \Cref{fig:dp_mia_results}. Both TabPATE and PromptPATE substantially reduce attack success in the low-\ac{FPR} regime: for the non-private TabPFN, LiRA reaches an \ac{AUC} of about $0.76$ and a \ac{TPR}@1\%\ac{FPR} of $10.9\%$, whereas TabPATE reduces the attack to near-random performance at both $\varepsilon=1$ and $\varepsilon=10$. PromptPATE shows a similar mitigation effect, but requires held-out in-distribution public data. Thus, TabPATE provides comparable empirical protection against \acp{MIA} while retaining the main practical advantage of requiring no public data. We present more results obtained on larger datasets in \Cref{app:larger_datasets}.

\begin{figure}[t]
	\centering
	\begin{minipage}[c]{0.65\linewidth}
		\includegraphics[width=\linewidth]{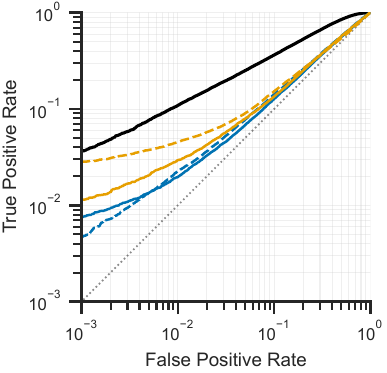}%
	\end{minipage}%
	\begin{minipage}[c]{0.35\linewidth}
		\includegraphics[width=\linewidth]{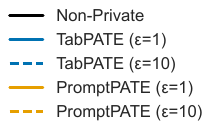}
	\end{minipage}
	\caption{\textbf{\ac{MIA} on non-private TabPFN, TabPATE, and PromptPATE.} Both \ac{DP} methods reduce LiRA success to near-random, especially in the low-\ac{FPR} region, results obtained on Credit-G.}
	\label{fig:dp_mia_results}
\end{figure}

\subsection{Regression}
\label{sec:regression}

TFMs are also used for regression, so we finally evaluate whether TabPATE extends beyond classification. For regression, we replace the noisy teacher vote with a bounded noisy aggregation of teacher predictions. We first normalize the per-teacher regression target to $[0,1]$ and clip each teacher prediction to this range to bound sensitivity. For a synthetic query $\tilde{x}$, the $K$ teachers output clipped scalar predictions $\hat{y}_1,\ldots,\hat{y}_K \in [0,1]$, which we aggregate by their mean and perturb with Gaussian noise. Since changing one private record can affect at most one teacher, the sensitivity of the averaged prediction is bounded by $1/K$. The noisy label is then clipped to $[0,1]$, mapped back to the original target scale, and used as a demonstration for the student. As before, releasing the student context is post-processing of the private aggregation.  

\Cref{tab:regression_results} reports normalized RMSE (NRMSE; lower is better) on eight regression datasets. TabPATE remains effective without public data, though regression is more challenging than classification. PromptPATE obtains the best private utility because it uses real in-distribution public queries. Among public-data-free TabPATE settings, allocating a small budget share to marginal query generation improves over purely data-independent queries on almost all datasets, reducing mean NRMSE from $0.806$ to $0.759$. This supports the role of $\alpha>0$ in settings where random queries insufficiently cover useful regions of the feature space.

\begin{table}[]
\centering
\caption{\textbf{Regression results at $\varepsilon=10$.} We report NRMSE normalized by the ground-truth standard deviation; lower is better.}
\label{tab:regression_results}
\scriptsize
\setlength{\tabcolsep}{2.5pt}
\begin{tabular}{lcccc}
\toprule
\multicolumn{1}{l|}{\multirow{2}{*}{Dataset}} & \multicolumn{1}{l|}{\multirow{2}{*}{Non-private}} & \multicolumn{1}{l|}{\multirow{2}{*}{PromptPATE$^\dagger$}} & \multicolumn{2}{c}{TabPATE}         \\ \cline{4-5} 
\multicolumn{1}{l|}{}                         & \multicolumn{1}{l|}{}                             & \multicolumn{1}{l|}{}                                      & ($\alpha{=}0$)  & ($\alpha>0^*$) \\ \hline
Abalone                                       & 0.645$\pm$0.014                                   & 0.735$\pm$0.017                                            & 0.817$\pm$0.033 & 0.793$\pm$0.010   \\
Calif. Housing                                & 0.358$\pm$0.010                                   & 0.569$\pm$0.000                                            & 0.741$\pm$0.103 & 0.596$\pm$0.007   \\
Diabetes (reg.)                               & 0.690$\pm$0.039                                   & 0.792$\pm$0.019                                            & 0.977$\pm$0.077 & 0.898$\pm$0.005   \\
Wine Quality                                  & 0.695$\pm$0.007                                   & 0.881$\pm$0.002                                            & 0.966$\pm$0.058 & 0.915$\pm$0.035   \\
Bike Sharing                                  & 0.218$\pm$0.005                                   & 0.531$\pm$0.040                                            & 0.617$\pm$0.019 & 0.601$\pm$0.008   \\
CPU Small                                     & 0.139$\pm$0.004                                   & 0.265$\pm$0.028                                            & 0.462$\pm$0.119 & 0.443$\pm$0.086   \\
Ames Housing                                  & 0.236$\pm$0.029                                   & 0.562$\pm$0.050                                            & 0.953$\pm$0.125 & 0.954$\pm$0.110   \\
Superconduct                                  & 0.257$\pm$0.002                                   & 0.504$\pm$0.009                                            & 0.918$\pm$0.037 & 0.873$\pm$0.058   \\ \hline
Mean                                          & 0.405                                             & 0.605                                                      & 0.806           & 0.759             \\ \hline
\end{tabular}
{\scriptsize $^\dagger$Requires public data from the target distribution.}
\\{\scriptsize $^*$All datasets use $\alpha=0.05$, except Wine Quality~($0.38$) and Calif. Housing~($0.21$).}
\end{table}

\section{Conclusion}
\label{sec:conclusion}

We showed that tabular \ac{ICL} leaks sensitive context records through membership inference and introduced TabPATE, a PATE-style \ac{DP} defense for tabular foundation models. TabPATE releases a private student context using synthetic queries from feature bounds or optional \ac{DP} marginals, avoiding the need for public in-distribution data. Empirically, it preserves utility while reducing \ac{MIA} success to near-random levels, suggesting a practical route to private tabular \ac{ICL}.

\bibliography{example_paper}
\bibliographystyle{icml2026}

\newpage
\crefalias{section}{appendix}
\crefalias{subsection}{appendix}
\appendix
\onecolumn

\section{Additional Membership Inference Details}
\label{app:vulnerability}

\subsection{Threat Models and Attack Signals}
\label{app:mia_methods}

We evaluate four \acp{MIA} against tabular \ac{ICL}, corresponding to the two output-access levels and two context-manipulation levels in \Cref{tab:threat_summary_app}. All attacks use the LiRA-style calibration from \Cref{eq:lira_score}.

\begin{table}[h]
    \centering
    \caption{\textbf{MIA methods for each threat model.}}
    \label{tab:threat_summary_app}
    \small
    \begin{tabular}{@{}lcc@{}}
        \toprule
        & \textbf{Passive} & \textbf{Active} \\
        \midrule
        \textbf{Black-box} & Perturbation & Brainwash \\
        \textbf{Grey-box} & LiRA & LiRA + Poisoning \\
        \bottomrule
    \end{tabular}
\end{table}

\paragraph{Perturbation.}
For each target $x_q$, we add Gaussian noise scaled by feature standard deviation and measure the fraction of perturbed samples that preserve the original prediction:
\[
s(x_q)=\frac{1}{N}\sum_{i=1}^{N}
\mathbf{1}[f(x_q+\epsilon_i)=f(x_q)].
\]
Members tend to be more stable because their input-output mapping is present in the context.

\paragraph{LiRA.}
For grey-box passive access, we use the logit-scaled confidence on the true label,
\[
s(x_q,y_q)=
\log \frac{p_f(y_q\mid x_q)}
{1-p_f(y_q\mid x_q)}.
\]
Context members tend to receive higher confidence.

\paragraph{Brainwash.}
For black-box active access, the adversary repeatedly adds copies of $x_q$ with an incorrect label and records the minimum number of copies required to flip the prediction. Members require more copies because the correct label is already reinforced in the context.

\paragraph{LiRA + Poisoning.}
For grey-box active access, the adversary injects $k$ mislabeled copies of $x_q$ and then applies LiRA-style calibration to the poisoned signal. Poisoning separates member and non-member distributions because non-members have only the injected incorrect label in the context, while members also have the original correct record.

\subsection{Per-Dataset Perturbation Results}
\label{app:perturbation_results}

\begin{table}[h]
    \centering
    \caption{\textbf{Label-only perturbation attack.} Results use LiRA calibration with 128 shadow demonstration sets.}
    \label{tab:perturbation}
    \small
    \begin{tabular}{@{}lccc@{}}
        \toprule
        \textbf{Dataset} & \textbf{AUC} & \textbf{TPR@1\%} & \textbf{TPR@5\%} \\
        \midrule
        German Credit & 0.559 & 1.7\% & 6.0\% \\
        Blood Transfusion & 0.509 & 1.0\% & 4.7\% \\
        Diabetes & 0.533 & 1.5\% & 5.6\% \\
        Vehicle & 0.635 & 1.3\% & 7.2\% \\
        KC1 & 0.518 & 1.5\% & 5.5\% \\
        \midrule
        Mean & 0.551 & 1.4\% & 5.8\% \\
        \bottomrule
    \end{tabular}
\end{table}

Even under label-only passive access, the attack performs above random on all datasets, although the signal is modest (\Cref{tab:perturbation}). Vehicle is the most vulnerable in this setting, consistent with the broader trend that higher-dimensional and multi-class datasets provide richer prediction signals.

\subsection{Poisoning Amplifies Membership Inference}
\label{app:poisoning_results}

\begin{table}[h]
    \centering
    \caption{\textbf{LiRA with demonstration poisoning.} AUC / TPR@1\%FPR for different poison budgets $k$.}
    \label{tab:poisoning_full}
    \small
    \begin{tabular}{@{}lcccc@{}}
        \toprule
        \textbf{Dataset} & $k=0$ & $k=5$ & $k=10$ & $k=20$ \\
        \midrule
        German Credit & 0.66 / 4.7\% & 0.96 / 61.4\% & 0.99 / 85.5\% & 1.00 / 99.8\% \\
        Blood Transfusion & 0.60 / 1.1\% & 0.64 / 0.2\% & 0.71 / 0.0\% & 0.83 / 15.0\% \\
        Diabetes & 0.68 / 6.6\% & 0.79 / 9.1\% & 0.97 / 55.5\% & 1.00 / 98.2\% \\
        Vehicle & 0.98 / 64.3\% & 1.00 / 99.7\% & 1.00 / 100.0\% & 1.00 / 96.8\% \\
        KC1 & 0.75 / 10.0\% & 0.83 / 12.7\% & 0.92 / 39.2\% & 0.96 / 67.0\% \\
        \midrule
        Average & 0.73 / 17.3\% & 0.84 / 36.6\% & 0.92 / 56.0\% & 0.96 / 75.4\% \\
        \bottomrule
    \end{tabular}
\end{table}

Poisoning strongly amplifies leakage. In \Cref{tab:poisoning_full} with $k=20$ mislabeled copies, the average AUC increases from $0.73$ to $0.96$, and several datasets become almost perfectly distinguishable. This corresponds to a small fraction of the context in our experiments, showing that active manipulation can expose membership even when passive attacks are weaker.

\subsection{Comparison to Threshold-Based Attacks}
\label{app:attack_comparison}

We compare LiRA calibration to common threshold-based attacks using loss, confidence, prediction gap, entropy, correctness, and modified entropy in \Cref{tab:attack_comparison}. LiRA is consistently strongest, with mean AUC $0.73$ compared to $0.55$ for the strongest threshold baseline. This supports using calibrated per-sample likelihood ratios as the main MIA evaluation.

\begin{table}[h]
    \centering
    \caption{\textbf{LiRA compared to threshold-based attacks.} Mean results over five datasets.}
    \label{tab:attack_comparison}
    \small
    \begin{tabular}{@{}lcc@{}}
        \toprule
        \textbf{Attack} & \textbf{AUC} & \textbf{TPR@1\%FPR} \\
        \midrule
        LiRA & 0.727 & 9.9\% \\
        LOSS & 0.554 & 2.6\% \\
        Prediction Gap & 0.554 & 2.5\% \\
        Confidence & 0.537 & 2.6\% \\
        Entropy & 0.537 & 2.6\% \\
        Correctness & 0.534 & 1.1\% \\
        Modified Entropy & 0.474 & 0.4\% \\
        \bottomrule
    \end{tabular}
\end{table}

\subsection{Perturbation and Attack-Cost Ablations}
\label{app:ablation_perturbation}

The label-only perturbation attack is robust to its main hyperparameters. Varying the number of perturbations (\Cref{tab:perturbation_count}) and perturbation scale (\Cref{tab:perturbation_scale}) changes AUC only marginally, remaining in the range $0.52$--$0.55$ on German Credit. Thus, its limited success is due to the restrictive black-box passive setting rather than a poorly tuned perturbation configuration.

\begin{table}[h]
    \centering
    \small
    \begin{minipage}{0.48\linewidth}
        \centering
        \caption{\textbf{Perturbation-count ablation} on German Credit.}
        \label{tab:perturbation_count}
        \begin{tabular}{@{}lcc@{}}
            \toprule
            \textbf{Perturbations $N$} & \textbf{AUC} & \textbf{TPR@1\%} \\
            \midrule
            10  & 0.526 & 2.1\% \\
            25  & 0.531 & 2.3\% \\
            50  & 0.532 & 1.1\% \\
            100 & 0.524 & 1.1\% \\
            \bottomrule
        \end{tabular}
    \end{minipage}
    \
    \begin{minipage}{0.48\linewidth}
        \centering
        \caption{\textbf{Perturbation-scale ablation} on German Credit with $N=50$.}
        \label{tab:perturbation_scale}
        \begin{tabular}{@{}lcc@{}}
            \toprule
            \textbf{Scale $\sigma$} & \textbf{AUC} & \textbf{TPR@1\%} \\
            \midrule
            0.05 & 0.530 & 1.7\% \\
            0.10 & 0.532 & 1.1\% \\
            0.20 & 0.536 & 1.3\% \\
            0.50 & 0.546 & 1.8\% \\
            \bottomrule
        \end{tabular}
    \end{minipage}
\end{table}

Because all attacks use the same number of shadow demonstration sets, their total cost is dominated by the number of target queries. LiRA and LiRA+Poisoning require one query per target after calibration, perturbation requires $N$ queries, and Brainwash requires up to the maximum poison budget.

\subsection{Comparison to Text-Based ICL}
\label{app:text_icl_comparison}

We also compare TabPFN to text-based tabular \ac{ICL} using TabLLM-style serialization \cite{hegselmannTabLLMFewshotClassification2023} in \Cref{tab:text_icl_comparison}. Both models leak membership, although TabPFN is more vulnerable in our experiment. This suggests that membership leakage is not an artifact of one implementation, but a broader risk of using sensitive records as in-context demonstrations.

\begin{table}[h]
    \centering
    \caption{\textbf{Brainwash attack on TabPFN vs. TabLLM} on German Credit.}
    \label{tab:text_icl_comparison}
    \small
    \begin{tabular}{@{}llc@{}}
        \toprule
        \textbf{Model} & \textbf{Pretraining} & \textbf{Brainwash AUC} \\
        \midrule
        TabPFN 2.5 & Synthetic SCMs & 0.88 \\
        TabLLM (Gemini) & Web text & 0.74 \\
        TabLLM (OLMo) & Web text & 0.73 \\
        \bottomrule
    \end{tabular}
\end{table}

\subsection{Feature Dimensionality}
\label{app:feature_dim}

We observe a moderate positive correlation between feature count and MIA success in \Cref{fig:features_vs_vulnerability}. Higher-dimensional datasets, such as Vehicle, tend to be more vulnerable, likely because model outputs encode richer per-record information. However, dimensionality alone does not fully explain leakage, which also depends on label type, dataset size, and local sample uniqueness.

\begin{figure}[t]
	\centering
	\includegraphics[width=0.6\linewidth]{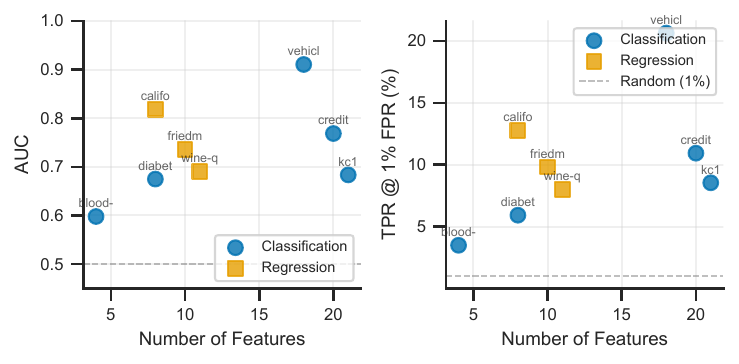}
	\caption{Number of features vs attack \ac{AUC} across datasets. Higher-dimensional datasets exhibit increased vulnerability.}
	\label{fig:features_vs_vulnerability}
\end{figure}

\section{TabPATE Algorithm and Privacy Analysis}
\label{app:tabpate}

\begin{algorithm}[h]
\caption{TabPATE for private tabular \ac{ICL}. Setting $\alpha=0$ skips the marginal-release step and samples queries from a fixed bounds-based prior.}
\label{alg:tabpate}
\small
\begin{algorithmic}[1]
\REQUIRE Private data $\mathcal{D}$, number of teachers $K$, target budget $(\varepsilon,\delta)$, budget split $\alpha$, query budget $M$, feature bounds.
\ENSURE Released student context $\widetilde{\mathcal{D}}$.
\STATE Partition $\mathcal{D}$ into disjoint subsets $\mathcal{D}_1,\ldots,\mathcal{D}_K$.
\STATE Define teacher $T_k(\cdot)$ as the frozen \ac{TFM} prompted with $\mathcal{D}_k$.
\IF{$\alpha>0$}
    \STATE Clip features and release noisy count, sum, and sum-of-squares under budget $(\alpha\varepsilon,\delta/2)$.
    \STATE Derive $\tilde{\mu}$ and $\tilde{\sigma}$ by post-processing the noisy aggregates.
\ENDIF
\STATE Initialize $\widetilde{\mathcal{D}}\leftarrow \emptyset$.
\FOR{$j=1,\ldots,M$}
    \IF{$\alpha=0$}
        \STATE Sample $\tilde{x}_j$ from a fixed prior over feature bounds.
    \ELSE
        \STATE Sample $\tilde{x}_j \sim \mathcal{N}(\tilde{\mu},\mathrm{diag}(\tilde{\sigma}^2))$, clipped to feature bounds.
    \ENDIF
    \STATE Query teachers and form vote counts $n_c=\sum_{k=1}^K \mathbf{1}[T_k(\tilde{x}_j)=c]$.
    \STATE Apply Confident-GNMax with budget $((1-\alpha)\varepsilon,\delta/2)$.
    \IF{Confident-GNMax returns label $\tilde{y}_j$}
        \STATE $\widetilde{\mathcal{D}}\leftarrow \widetilde{\mathcal{D}}\cup\{(\tilde{x}_j,\tilde{y}_j)\}$.
    \ENDIF
\ENDFOR
\STATE \text{\textbf{RETURN}} $\widetilde{\mathcal{D}}$.
\end{algorithmic}
\end{algorithm}

\paragraph{Query generation.}
For $\alpha=0$, query generation is data-independent; after normalizing features to known ranges, we sample from a fixed prior such as a uniform distribution over the feature domain or a Gaussian centered at the midpoint of each range. This step has zero privacy cost. For $\alpha>0$, we estimate per-feature means and variances with \ac{DP}. Concretely, after clipping each feature vector to $\ell_2$ norm at most $C$, we release noisy versions of
\[
q_1(\mathcal{D})=|\mathcal{D}|,\qquad
q_2(\mathcal{D})=\sum_{x\in\mathcal{D}} x,\qquad
q_3(\mathcal{D})=\sum_{x\in\mathcal{D}} x\odot x .
\]
The sensitivities under add/remove adjacency are $\Delta_1=1$, $\Delta_2\le C$, and $\Delta_3\le C^2$. We then compute $\tilde{\mu}$ and $\tilde{\sigma}$ from these noisy sufficient statistics. Sampling queries from these estimates is post-processing.

\paragraph{Privacy guarantee.}
TabPATE operates in the central-\ac{DP} model. We state the guarantee for the released student context $\widetilde{\mathcal{D}}$.

\begin{theorem}
For any $\alpha\in[0,1]$, TabPATE satisfies $(\varepsilon,\delta)$-\ac{DP} under add/remove adjacency, assuming the optional marginal release is calibrated to $(\alpha\varepsilon,\delta/2)$-\ac{DP} and Confident-GNMax labeling is run with budget $((1-\alpha)\varepsilon,\delta/2)$.
\end{theorem}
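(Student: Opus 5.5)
The proof decomposes TabPATE into two mechanisms and combines their guarantees by adaptive composition and post-processing. Let $\mathcal{M}_1(\mathcal{D})$ be the optional marginal release: the noisy $q_1,q_2,q_3$ from \Cref{alg:tabpate}. Let $\mathcal{M}_2(\mathcal{D};\tilde\mu,\tilde\sigma)$ be the Confident-GNMax labeling of the $M$ synthetic queries, which takes the output of $\mathcal{M}_1$ as auxiliary input. The released context $\widetilde{\mathcal{D}}$ is a deterministic function of the outputs of $\mathcal{M}_1$ and $\mathcal{M}_2$ together with independent query-sampling randomness. Once both mechanisms are shown private, post-processing handles everything else: deriving $\tilde\mu,\tilde\sigma$, sampling and clipping the queries $\tilde x_j$, and forming the student context. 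Downstream student predictions are also post-processing.

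The argument has three steps.

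\textbf{Step 1 (marginal release).} For $\alpha>0$, the three statistics have sensitivities $\Delta_1=1$, $\Delta_2\le C$ and $\Delta_3\le C^2$ under add/remove adjacency. I would calibrate the Gaussian mechanism jointly: either stack the three statistics with per-block noise scales, or split $(\alpha\varepsilon,\delta/2)$ across them by basic composition. Either way $\mathcal{M}_1$ is $(\alpha\varepsilon,\delta/2)$-DP, which is the hypothesis assumed in the theorem. For $\alpha=0$, $\mathcal{M}_1$ is absent: the prior is fixed and data-independent, so this step is trivially $(0,0)$-DP.

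\textbf{Step 2 (labeling).} Each private record lies in exactly one partition $\mathcal{D}_k$. The partition must be a fixed function of the record, for example a hash or a data-independent assignment, so that adding or removing a record does not reshuffle the other teachers. Then adding or removing a record changes at most one teacher's vote on each query. The vote histogram $(n_c)$ therefore changes by at most one unit in two coordinates, which gives $\ell_2$-sensitivity at most $\sqrt2$. This is the setting of the Confident-GNMax analysis of \citet{papernotScalablePrivateLearning2018}. Its RDP accounting covers both the noisy threshold check and the GNMax answer, composed over all $M$ queries, where the query $\tilde x_j$ may depend adaptively on $\mathcal{M}_1$'s output. That accounting is converted to $((1-\alpha)\varepsilon,\delta/2)$-DP, again by hypothesis. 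I would stress that the privacy cost covers all $M$ queries, including those that abstain. The data-dependent analysis of Confident-GNMax requires either releasing its smooth-sensitivity-noised bound or using the data-independent bound; I would invoke the version the hypothesis refers to.

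\textbf{Step 3 (composition).} Basic adaptive sequential composition for $(\varepsilon,\delta)$-DP gives $(\alpha\varepsilon+(1-\alpha)\varepsilon,\ \delta/2+\delta/2)=(\varepsilon,\delta)$. Post-processing then transfers this guarantee to $\widetilde{\mathcal{D}}$. The composition is adaptive because $\mathcal{M}_2$'s queries depend on $\mathcal{M}_1$'s output, but adaptive basic composition holds for approximate DP, so the conclusion is unaffected.

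The main obstacle is Step 2, specifically making sure the adjacency and sensitivity assumptions behind the cited Confident-GNMax guarantee really hold here. The stability of the partition under add/remove must be argued, since a naive random split by index would let one record shift many others. One must also confirm that abstentions and the data-dependent privacy analysis are fully accounted for within $((1-\alpha)\varepsilon,\delta/2)$. I would address this first by fixing a record-keyed partition and citing the data-independent RDP bound for Confident-GNMax. With that in place, the remaining steps are standard.
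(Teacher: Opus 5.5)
Your proposal is correct and follows the same route as the paper. Both treat TabPATE as the sequential composition of the optional $(\alpha\varepsilon,\delta/2)$ marginal release (which is $(0,0)$ when $\alpha=0$) with the $((1-\alpha)\varepsilon,\delta/2)$ Confident-GNMax labeling, take each guarantee from the hypotheses, and handle query sampling and the release of $\widetilde{\mathcal{D}}$ by post-processing. Your extra points are refinements the paper's proof glosses over rather than departures from it: a record-keyed partition so that add/remove adjacency really touches only one teacher, adaptive composition on the first mechanism's output, and charging the threshold check on abstained queries without relying on an unreleased data-dependent bound. The paper simply asserts that one record affects at most one vote and that labeling stops once the budget is exhausted.
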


\begin{proof}
TabPATE is the sequential composition of two mechanisms. The first mechanism, present only when $\alpha>0$, releases noisy marginal statistics. Gaussian noise is calibrated to the sensitivities of the count, sum, and sum-of-squares queries so that their joint release satisfies $(\alpha\varepsilon,\delta/2)$-\ac{DP}. The derived means, variances, and synthetic query samples are post-processing and incur no additional privacy loss. When $\alpha=0$, no private statistic is released, so this step is $(0,0)$-\ac{DP}.

The second mechanism labels synthetic queries using Confident-GNMax~\citep{papernotScalablePrivateLearning2018}. The teachers are defined on disjoint partitions of $\mathcal{D}$, so changing one record affects at most one teacher vote. Confident-GNMax adds calibrated noise to the vote aggregation and tracks privacy expenditure across answered queries; we stop labeling once the allocated budget \[
((1-\alpha)\varepsilon,\delta/2)
\]
is exhausted.

By sequential composition, the combined mechanism satisfies
\[
(\alpha\varepsilon,\delta/2) + ((1-\alpha)\varepsilon,\delta/2)
= (\varepsilon,\delta).
\]
Finally, releasing and using the student context $\widetilde{\mathcal{D}}$ is post-processing of the private aggregation outputs, so downstream predictions with the frozen \ac{TFM} do not incur additional privacy cost.
\end{proof}

\section{Baselines}
\label{app:baselines}

In the following, we present our baselines in more detail. We also provide an overview in \Cref{tab:dp_approaches_unified}.

\textbf{DP-SGD (non-ICL baseline).}
\label{sec:defense_dpsgd}
As an additional reference point, we include a non-ICL baseline. We train a RealMLP~\cite{holzmuller2024realmlp} with DP-SGD.

\textbf{DP-Synthetic Data.}
\label{sec:defense_synthetic}
\ac{DP}-Synthetic generates \ac{DP} synthetic data from the full private dataset via AIM~\cite{mckennaAIMAdaptiveIterative2022} or MST~\citep{mckennaWinningNISTContest2021} and uses them as demonstrations for \ac{ICL} of a student model.

\textbf{DP-TabICL.}
\label{sec:defense_dptabicl}
DP-TabICL~\cite{carey2024dptabicl} releases $k$ private prototypes from the private data. It splits the data into stratified splits. They propose an LDP and GDP variant. The former is limited to less than 14 binary features due to exponential scaling in the number of dimensions, hence we evaluate the latter, which computes DP per-feature means with the Laplace mechanism.

\textbf{PromptPATE.}
\label{sec:defense_promptpate}
PromptPATE~\citep{duanFlocksStochasticParrots2023} partitions the private data among \ac{ICL} teachers and uses public in-distribution data for the private knowledge transfer via Confident-GNMax~\citep{papernotScalablePrivateLearning2018}. It requires access to unlabeled public data from the same distribution as the private data, which limits applicability in sensitive domains.

\textbf{Query-Time.}
\label{sec:defense_querytime}
Query-Time, inspired by~\cite{nissimSmoothSensitivitySampling2007,wu2024privacy}, partitions the context data among teachers and directly answers queries at inference time with aggregated noisy ensemble predictions. Each query incurs additional privacy cost, requiring the system to stop once the budget is exhausted.

\begin{table}[t]
  \centering
  \caption{\textbf{Overview of \ac{DP} defenses.}}
  \label{tab:dp_approaches_unified}
  \small
  \begin{tabular}{lccc}
    \toprule
    Defense & No Public Data & Releasable & Privacy Cost \\
    \midrule
    \rowcolor{gray!15}
    DP-SGD (RealMLP)$^\ddagger$ & \gcheck & \gcheck & Training \\
    \hdashline
    \ac{DP}-Synthetic   & \gcheck & \gcheck & Marginals \\
    DP-TabICL           & \gcheck & \gcheck & Data release \\
    Query-Time          & \gcheck & \rcross & Aggregation (per-query) \\
    PromptPATE & \rcross & \gcheck & Aggregation \\
    \textbf{TabPATE (ours)} & \gcheck & \gcheck & Marginals \& Aggregation \\
    \bottomrule
  \end{tabular}

  {\scriptsize $^\ddagger$Non-ICL baseline: trains a task-specific model with DP-SGD instead of using ICL.}
\end{table}

\section{Additional Experimental Results}
\label{app:experiments}

\subsection{Extended Utility Results}
\label{app:utility_results}

This section reports the full utility results omitted from the main paper for space. 
\Cref{tab:dp_main_results_full} includes both balanced and standard accuracy averaged across datasets, while \Cref{tab:dp_results_per_dataset} provides the per-dataset balanced accuracy.

\begin{table}[t]
    \centering
    \caption{\textbf{Accuracy at different privacy budgets} (mean $\pm$ std across 5 datasets, 5 seeds).}
    \label{tab:dp_main_results_full}
    \small
    \setlength{\tabcolsep}{4pt}
    \begin{tabular}{lcccc}
        \toprule
                                     & \multicolumn{2}{c}{Balanced Acc} & \multicolumn{2}{c}{Standard Acc} \\
        \cmidrule(lr){2-3} \cmidrule(lr){4-5}
        Approach                     & $\varepsilon = 1$  & $\varepsilon = 10$ & $\varepsilon = 1$  & $\varepsilon = 10$ \\
        \midrule
        Non-private                  & \multicolumn{2}{c}{76.7 $\pm$ 2.4\%} & \multicolumn{2}{c}{83.0 $\pm$ 2.1\%} \\
        \midrule
        \ac{DP}-SGD                  & 51.9 $\pm$ 0.9\% & 52.3 $\pm$ 1.0\% & 73.1 $\pm$ 2.1\% & 73.2 $\pm$ 2.1\% \\
        \ac{DP}-TabICL               & 51.9 $\pm$ 0.9\% & 52.3 $\pm$ 1.0\% & 73.1 $\pm$ 2.1\% & 73.2 $\pm$ 2.1\% \\
        \ac{DP}-Synthetic            & 51.9 $\pm$ 0.9\% & 52.3 $\pm$ 1.0\% & 73.1 $\pm$ 2.1\% & 73.2 $\pm$ 2.1\% \\
        Query-Time                   & 52.6 $\pm$ 1.5\% & 61.3 $\pm$ 4.3\% & 56.3 $\pm$ 2.7\% & 71.2 $\pm$ 3.4\% \\
        PromptPATE$^\dagger$         & 61.3 $\pm$ 2.6\% & 68.6 $\pm$ 3.0\% & 73.6 $\pm$ 1.9\% & 78.6 $\pm$ 2.2\% \\
        \rowcolor{gray!15}
        \textbf{TabPATE}             & 55.6 $\pm$ 4.9\% & 71.2 $\pm$ 4.6\% & 77.2 $\pm$ 5.6\% & 78.3 $\pm$ 2.7\% \\
        \bottomrule
    \end{tabular}

    {\scriptsize $^\dagger$Requires public data from target distribution; we use 30\% held-out training data.}
\end{table}

\begin{table*}[t]
    \centering
    \caption{\textbf{Balanced accuracy at $\varepsilon = 1$ and $\varepsilon = 10$ for each dataset} (mean $\pm$ std across seeds). TabPATE requires no public data; PromptPATE uses 30\% held-out in-distribution data.}
    \label{tab:dp_results_per_dataset}
    \footnotesize
    \resizebox{\textwidth}{!}{%
    \begin{tabular}{@{}lcccccccccc@{}}
        \toprule
         & \multicolumn{2}{c}{Blood Transfusion} & \multicolumn{2}{c}{Credit-G} & \multicolumn{2}{c}{Diabetes} & \multicolumn{2}{c}{Phoneme} & \multicolumn{2}{c}{Wilt} \\
        \cmidrule(lr){2-3} \cmidrule(lr){4-5} \cmidrule(lr){6-7} \cmidrule(lr){8-9} \cmidrule(lr){10-11}
        Approach & $\varepsilon=1$ & $\varepsilon=10$ & $\varepsilon=1$ & $\varepsilon=10$ & $\varepsilon=1$ & $\varepsilon=10$ & $\varepsilon=1$ & $\varepsilon=10$ & $\varepsilon=1$ & $\varepsilon=10$ \\
        \midrule
        Non-private & \multicolumn{2}{c}{65.0$\pm$4.9} & \multicolumn{2}{c}{67.2$\pm$3.2} & \multicolumn{2}{c}{72.7$\pm$4.2} & \multicolumn{2}{c}{88.0$\pm$1.6} & \multicolumn{2}{c}{89.4$\pm$6.2} \\
        \midrule
        \ac{DP}-Synthetic & 50.3$\pm$1.3 & 50.3$\pm$0.9 & 50.0$\pm$0.0 & 50.0$\pm$0.0 & 50.0$\pm$0.0 & 50.0$\pm$0.0 & 58.1$\pm$10.2 & 60.2$\pm$11.1 & 50.0$\pm$0.0 & 50.0$\pm$0.0 \\
        Query-Time & 53.8$\pm$3.0 & 56.0$\pm$3.0 & 50.7$\pm$1.0 & 52.8$\pm$0.9 & 56.3$\pm$3.6 & 63.1$\pm$0.5 & 51.0$\pm$0.8 & 62.6$\pm$6.2 & 51.1$\pm$1.1 & 72.1$\pm$11.5 \\
        PromptPATE$^\dagger$ & 62.7$\pm$9.2 & 60.0$\pm$5.8 & 50.9$\pm$1.1 & 54.6$\pm$5.3 & 56.3$\pm$7.1 & 63.6$\pm$4.6 & 68.5$\pm$4.6 & 72.3$\pm$3.2 & 67.8$\pm$17.0 & 89.7$\pm$5.0 \\
        \rowcolor{gray!15}
        TabPATE ($\alpha{=}0$) & 55.1$\pm$4.9 & 55.0$\pm$5.1 & 54.3$\pm$1.4 & 56.5$\pm$2.2 & 73.5$\pm$2.2 & 75.7$\pm$1.2 & 70.3$\pm$2.0 & 72.5$\pm$1.3 & 56.8$\pm$7.9 & 62.7$\pm$5.7 \\
        \rowcolor{gray!15}
        TabPATE & 53.5$\pm$0.7 & 51.9$\pm$0.0 & 50.0$\pm$0.0 & 50.0$\pm$0.0 & 61.6$\pm$0.0 & 63.8$\pm$0.4 & 66.1$\pm$3.2 & 67.4$\pm$4.3 & 59.9$\pm$14.0 & 88.4$\pm$5.1 \\
        \bottomrule
    \end{tabular}%
    }

    {\scriptsize $^\dagger$Requires public data from target distribution; we use 30\% held-out training data.}
\end{table*}

\subsection{Effect of Query Generation}
\label{app:query_generation}

\Cref{tab:wilt_query_generation} illustrates why marginal-based queries are useful in some settings. On Wilt, a highly imbalanced dataset, random uniform or Gaussian queries provide poor balanced accuracy, while DP marginal queries place more queries near the relevant data region and recover high utility.

\begin{table}[h]
    \centering
    \caption{\textbf{Query generation on Wilt} at $\varepsilon=10$.}
    \label{tab:wilt_query_generation}
    \small
    \begin{tabular}{lc}
        \toprule
        \textbf{Query strategy} & \textbf{Balanced accuracy} \\
        \midrule
        DP Marginal & 0.88 $\pm$ 0.05 \\
        Gaussian prior & 0.63 $\pm$ 0.06 \\
        Uniform prior & 0.62 $\pm$ 0.13 \\
        \bottomrule
    \end{tabular}
\end{table}

\subsection{Larger Classification Datasets}
\label{app:larger_datasets}

We evaluate TabPATE on larger datasets to test whether public-data-free PATE-style transfer remains useful beyond the small OpenML benchmarks \cite{bischl2021cc18}. \Cref{tab:larger_utility_app} reports utility at $\varepsilon=10$. We also measure passive grey-box MIA vulnerability of non-private tabular \ac{ICL} on these datasets in \Cref{tab:larger_mia_app}. Non-private \ac{ICL} remains vulnerable even at tens of thousands of samples, although leakage decreases on some larger datasets, consistent with the privacy onion effect~\cite{carliniPrivacyOnionEffect2022} with potential fairness implications \cite{cresswell2025trustworthy}.

\begin{table}[h]
    \centering
    \caption{\textbf{Utility on larger classification datasets} at $\varepsilon=10$ (mean $\pm$ std). PromptPATE requires public in-distribution data.}
    \label{tab:larger_utility_app}
    \small
    \setlength{\tabcolsep}{4pt}
    \begin{tabular}{lcccc}
        \toprule
        \textbf{Dataset} & \textbf{Non-private} & \textbf{PromptPATE}$^\dagger$ & \textbf{TabPATE-R} & \textbf{TabPATE-M} \\
        \midrule
        Electricity & .858$\pm$.001 & .758$\pm$.004 & .738$\pm$.003 & .731$\pm$.006 \\
        Bank Marketing & .745$\pm$.001 & .619$\pm$.025 & .638$\pm$.033 & .585$\pm$.116 \\
        Adult Income & .781$\pm$.002 & .734$\pm$.009 & .775$\pm$.030 & .781$\pm$.024 \\
        Madelon & .907$\pm$.008 & .539$\pm$.056 & .529$\pm$.057 & .520$\pm$.023 \\
        \bottomrule
    \end{tabular}

    \vspace{2pt}
    {\scriptsize $^\dagger$Uses public in-distribution data for teacher queries.}
\end{table}

\begin{table}[h]
    \centering
    \caption{\textbf{Non-private MIA leakage on larger datasets.} Passive grey-box LiRA attack.}
    \label{tab:larger_mia_app}
    \small
    \begin{tabular}{lccc}
        \toprule
        \textbf{Dataset} & \textbf{$n$} & \textbf{AUC} & \textbf{TPR@1\%FPR} \\
        \midrule
        Electricity & 38,474 & .711 & .104 \\
        Bank Marketing & 45,211 & .632 & .086 \\
        Adult Income & 48,842 & .576 & .038 \\
        Madelon & 2,600 & .915 & .330 \\
        \bottomrule
    \end{tabular}
\end{table}

\end{document}